\newtheorem{theorem}{Theorem}
\newtheorem{lemma}{Lemma}
\theoremstyle{definition}
\newtheorem{definition}{Definition}
\newtheorem{example}{Example}
\newtheorem{corollary}{Corollary}
\title{Unravelling the (In)compatibility of Statistical-Parity and Equalized-Odds}
\author{
	\href{https://orcid.org/0000-0001-5395-456X}{\includegraphics[scale=0.06]{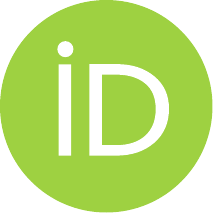}\hspace{1mm}Mortaza~S.~Bargh}\thanks{Corresponding author.} \\
	Research and Data Centre \\
	Dutch Ministry of Justice and Security\\
	The Hague, The Netherlands \\
	\texttt{m.shoae.bargh@wodc.nl} \\
	\And
	\href{https://orcid.org/0000-0003-2772-6330}{\includegraphics[scale=0.06]{orcid.pdf}\hspace{1mm}Sunil~Choenni} \\
	Research and Data Centre \\
	Dutch Ministry of Justice and Security\\
	The Hague, The Netherlands \\
	\texttt{r.choenni@wodc.nl} \\
	\And
	\href{https://orcid.org/0000-0003-2772-6330}{\includegraphics[scale=0.06]{orcid.pdf}\hspace{1mm}Floris~ter~Braak} \\
	Research and Data Centre \\
	Dutch Ministry of Justice and Security\\
	The Hague, The Netherlands \\
	\texttt{f.ter.braak@wodc.nl} \\
}
\begin{document}
\maketitle

\begin{abstract}
 A key challenge in employing data, algorithms and data-driven systems is to adhere to the principle of fairness and justice. Statistical fairness measures belong to an important category of technical/formal mechanisms for detecting fairness issues in data and algorithms. In this contribution we study the relations between two types of statistical fairness measures namely Statistical-Parity and Equalized-Odds. The Statistical-Parity measure does not rely on having ground truth, i.e., (objectively) labeled target attributes. This makes Statistical-Parity a suitable measure in practice for assessing fairness in data and data classification algorithms. Therefore, Statistical-Parity is adopted in many legal and professional frameworks for assessing algorithmic fairness. The Equalized-Odds measure, on the contrary, relies on having (reliable) ground-truth, which is not always feasible in practice. Nevertheless, there are several situations where the Equalized-Odds definition should be satisfied to enforce false prediction parity among sensitive social groups. We present a novel analyze of the relation between Statistical-Parity and Equalized-Odds, depending on the base-rates of sensitive groups. The analysis intuitively shows how and when base-rate imbalance causes incompatibility between Statistical-Parity and Equalized-Odds measures. As such, our approach provides insight in (how to make design) trade-offs between these measures in practice. Further, based on our results, we plea for examining base-rate (im)balance and investigating the possibility of such an incompatibility before enforcing or relying on the Statistical-Parity criterion. The insights provided, we foresee, may trigger initiatives to improve or adjust the current practice and/or the existing legal frameworks.
\end{abstract}

\keywords{Algorithmic fairness \and Base-rate imbalance \and Equalized-odds \and Incompatibility \and Statistical-parity \and Tradeoffs}

\section{Introduction}
\label{sec1}

Data are currently being generated, collected, shared, analyzed, and distributed at a fast-growing pace. Consequently, we witness a rising interest to harvest the increasingly available data by developing Information Systems (ISs) that ease our daily lives, create added values for businesses, provide insight into societal phenomena, and guide policymaking processes. Capitalizing on data, however, requires being attentive about the associated risks, like data often being blended with biased, partial, faulty, sensitive, and stigmatizing information about individuals, groups, and organizations. 
 
Dealing with wrongful bias and discrimination of individuals and groups (i.e., unfairness), which is embedded in collected data and/or induced due to inattentive data processing or data-driven service provisioning, is a key challenge in development and deployment of data-driven systems \cite{bibBalayn2021}. This challenge manifest not only in data processing algorithms $-$ like those used in Statistics, Artificial Intelligence (AI) and Machine Learning (ML) $-$ but also in various stages of data management like data selection, data mining, data cleaning, information integration and data discovery. These fairness issues must be identified and mitigated at various stages of data management and within data processing activities \cite{bibBalayn2021}. Not handling these fairness issues appropriately and adequately would harm individuals, groups, and the whole society at large; adversely affecting the basic human rights like privacy, liberty, autonomy, and dignity. 

Inspired by concepts like justice, bias and discrimination that stem from various disciplines such as ethics, philosophy, political science, legal science, criminology, sociology, anthropology, neuroscience, and psychology  \cite{bibDolata}, technical experts and IS developers have introduced algorithmic fairness definitions, measures and metrics to quantify (aspects of) those non-technical concepts and to detect or deal with unjustified biases and wrongful discriminations in datasets and algorithms. 

Statistical fairness measures and metrics belong to a category of formal fairness methods that have been developed and deployed widely. Statistical fairness measures/metrics provide automatically implementable mechanisms to detect and mitigate aspects of fairness issues in datasets and algorithms. Nevertheless, there are studies that show incompatibility among these measures which means that they cannot be satisfied simultaneously. For example, as shown in \cite{bibChouldechova, bibKleinberg}, Positive Predictive Value (PPV) equality measure is not compatible with Equalized-Odds measure, where the latter is based on False Negative Rate (FNR) and False Positive Rate (FPR) equality measures. This incompatibility was surfaced after fairness issues reported in the Correctional Offender Management Profiling for Alternative Sanctions (COMPAS) tool \cite{dieterich2016compas, propublicaCompass}. The COMPAS tool was used for measuring the risk of recidivism, i.e., a defender recommitting crime. Despite the tool being designed to be fair from Predictive-Parity perspective \cite{dieterich2016compas}, it was shown in a follow up study that the tool was unfair from the Equalized-Odds perspective \cite{propublicaCompass}. Specifically, African American defendants were more likely than Caucasian defendants to be incorrectly flagged as higher risk of recidivism (i.e., having a higher FPR), while Caucasian defendants were more likely than African American defendants to be incorrectly flagged as low risk of recidivism (i.e., having a higher FNR).

In this contribution we aim at providing an intuitive insight in how statistical fairness measures relate to each other, whether and in which circumstances they are (not) compatible with each other. Specifically, we consider two categories of statistical fairness measures namely Statistical-Parity and Equalized-Odds for datasets with binary classification data. The Statistical-Parity based definition of fairness is widely used in practice and legal frameworks to asses the fairness of classification predictions where there is no ground truth, i.e., (objectively) labeled target attributes, available. The Equalized-Odds based definition of fairness, on the contrary, can be used in those situations where there is ground truth available for such attributes. Specifically, our study focuses on the cases where there is a base-rate imbalance among sensitive social groups. Base-rate for a sensitive social group is the frequentist probability of observing positive outcomes for the group in a population/dataset. Within this context, our main contributions can be expressed as follows.
\begin{itemize}
\item  Using a novel approach, we show that enforcing both Statistical-Parity and Equalized-Odds requires either having base-rate balance, like the result reported in \cite{bibGarg2020}, or choosing for a random classifier.
\item The adopted approach allows us to graphically show how the base-rate imbalance impacts the compatibility of Statistical-Parity and Equalized-Odds adversely. 
\item The graphical representation allows us to intuitively show that an efficient classifier, which performs well in terms of, e.g., accuracy, requires making trade-off between Statistical-Parity and Equalized-Odds.
\item We recommend improvements to the current legal frameworks and practices which are based on using Statistical-Parity for assessing the fairness of classifiers and similar analytical approaches and/or decision-making processes.  
\end{itemize}

For this study, we formalize the problem context and adopt an analytical approach to derive the insights. Based on literature, we interpret the insights in their broader context, i.e., in terms of their legal and social implications. These insights shed light on the limitations of existing methods, practices and (legal) frameworks used for assessing the fairness of classification-based decision-making processes and suggest ways to remedy these shortcomings. As such they can be used for enriching the existing practices and legal frameworks. 

The outline of the paper is as follows. In Section \ref{sec2}, we provide background information about the problem context, notation used, and related work. In Section \ref{sec4}, we describe the problem context, which includes defining Statistical-Parity and Equalized-Odds measures formally. In Section \ref{sec5}, we employ our approach for analyzing the relation between these fairness definitions and, in Section \ref{sec6}, we discuss our results and their implications on current practices and legal frameworks. In Section \ref{sec7}, we draw our conclusions and mention several directions for future research. 

\section{Preliminaries}
\label{sec2}
In this section we provide some background information about the problem context in Section \ref{subsec21}, the notation used throughout the paper in Section \ref{subsec22}, and the related work in Section \ref{subsec23}. 
\subsection{Problem Context}
\label{subsec21}
The setting of this study concerns data-driven ISs where a data classification is performed along the journey path of data as part of information processing. Often in a classifier, training algorithms are used for deriving classification models, the trained models are used to predict outcomes based on some newly observed data, and the predicted outcomes are used as a (pre)selection criterion before and/or as (partial) evidence for making decisions about individuals (e.g., in credit granting) and/or groups (e.g., in policy-making). To illustrate various aspects of the the concepts discussed in this paper, we will use a running example. Note that this example, which will be extended in a few steps as we proceed, is purely hypothetical.      
\begin{example}
A bank receives an overwhelming number of applications for mortgage. As investigation of every application according to the bank's mortgage granting procedure is highly labor-intensive and time consuming, the bank makes a preselection of applications and investigates only the preselected applications according to the bank's procedure to decide on granting or denying the requested mortgages. To increase efficiency, the bank preselects potentially eligible applications, which correspond to those applicants who are assumed likely to pay back their mortgage in due time (e.g., in 10 years). Based on some historical data, therefore, the bank develops a classification model that uses some feature attributes like age group, education level, income category, and marital status to predict whether an applicant is able to pay back a mortgage in due time (e.g., the positive prediction outcome) or not (i.e., the negative prediction outcome). Those applicants who are marked positively, i.e., as potentially eligible applicants, by the classification algorithm are preselected. $\Box$
\end{example}

\subsection{Notation}
\label{subsec22}

In a classification algorithm, the values of feature attribute(s) $X$ are mapped to the values of the class attribute $\hat{Y}$ by using a trained classification model. In this way, every value of the class attribute $\hat{Y}$ is a prediction of the value of class attribute $Y$. For historical data, the values of the class attribute $Y$ might be known (and may have incorrectness/biases), which can be used for training and testing classification models. Ground truth and prediction attributes $Y$ and $\hat{Y}$ take values from the ground truth sample space $\mathscr{S}_Y$ and prediction sample space $\mathscr{S}_{\hat{Y}}$, respectively. 

In this contribution, we consider the case of binary classification where the outcomes of the classification is either \textit{desired} (also called a positive outcome or a success) or \textit{undesired} (also called a negative outcome or a failure). This terminology can be defined based on the purpose of a particular application in mind. For example, the desired outcome can be to obtain a mortgage, to diagnose a disease or to find a suitable employee if the data classification is used for granting mortgages, for diagnosing diseases, or for hiring people, respectively. We will use values $1$ and $0$ to denote the desired and undesired outcomes, respectively, for attributes $Y$ and $\hat{Y}$. In other words, $\mathscr{S}_Y = \mathscr{S}_{\hat{Y}} = \{0, 1\}$.

A binary channel, as shown in Figure \ref{figBasicEC}, can be used to model several performance measures of a binary classification. The binary channel model is specified by directed edges True Positives (TPs), False Positives (FPs), True Negatives (TNs), and False Negatives (FNs). Like in a binary communication channel, $\hat{Y}$ can be seen as a noisy version of $Y$. The noise exists due to uncertainty within the prediction activity, thus it can be referred to as prediction noise. 

\begin{figure}[!t]
\centering
\includegraphics[scale=0.25]{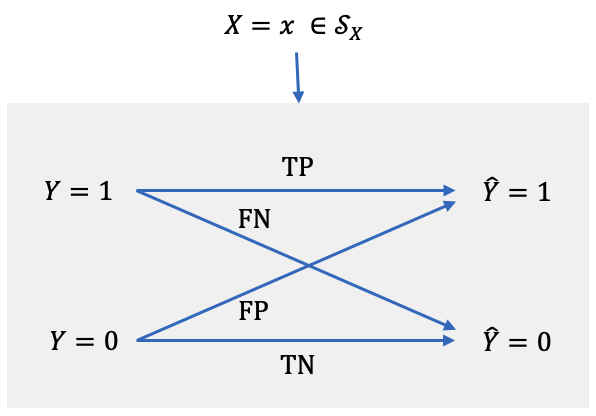}
\caption{A binary channel model of binary classification.}
\label{figBasicEC}
\end{figure}

As indicated in Figure \ref{figBasicEC}, a subset of feature attributes, called sensitive attribute(s) $S \subset X$, represent socially, legally and/or ethically sensitive groups among whom establishing some form of fairness is necessary. For example, features such as gender, race and religion, and their combination, are typical sensitive features for non-discrimination. Without loss of generality, in this paper we will assume that there is one sensitive attribute $S$, which takes binary values $1$ and $0$ for protected group (representing a socially unprivileged group) and unprotected group (representing a socially privileged group), respectively. The notation used in the binary channel model, which will be used also for describing the statistical fairness definitions and measures throughout the paper, is summarized in Table \ref{tabNot}.

\begin{table*}[!t]
\centering
\caption{Description of the attributes used in binary channel model in Figure \ref{figBasicEC}}
\label{tabNot}%
\begin{tabular}{@{}p{5em} p{0.85\textwidth}}
\toprule
Symbol 				&	Description 																																					\\
\midrule
$X$  					&	All feature attribute(s) describing some aspects of individuals. 																												\\
$S$					& 	The sensitive attribute, taking values $1$ and $0$ for protected and unprotected groups, respectively (i.e., sample space $\mathscr{S}_{S} = \{ 0, 1\}$). 									\\
$X' = X \backslash S$	& 	The non-sensitive attribute(s), describing some characteristics of individuals. 																									\\
$Y$					& 	The class attribute representing the ground truth, taking values $1$ and $0$ for desired and undesired outcomes, respectively (i.e., sample space $\mathscr{S}_Y = \{ 0, 1 \}$). 									\\
$\hat{Y}$ 				&	The predicted class attribute representing the outcomes of the binary classification, taking values $1$ and $0$ for desired and undesired outcomes, respectively (i.e., sample space $\mathscr{S}_Y = \{ 0, 1 \}$).			\\
\bottomrule
\end{tabular}
\end{table*}
								
\begin{example}
In the mortgage example, the feature attributes $X$ are the age group, education level, income category, and marital status; and the class attribute $Y$ and its prediction $\hat{Y}$ assume values 1 and 0, denoting a mortgagor being (predicted to be) able to pay back a mortgage or not, respectively. In this example, we consider the age group as the sensitive feature $S$ with two groups namely young workforce and experienced workforce, who have entered the housing market for a short or long time, respectively. The group of young workforce is assumed as the protected (i.e., unprivileged) group, for which $S=1$.  
$\Box$       
\end{example}

The measures that will be investigated in this contribution belong to a category of fairness measures called statistical fairness measures. Statistical fairness measures are quantified by frequentist probabilities that are, in turn, defined as fractions of the numbers of observed cases like $TP$s, $FP$s, $TN$s, $FN$s and their combinations. In our setting, these numbers, which are further specified per sensitive group $S=s$, where $s \in \{0, 1 \}$, are denoted by $N_{TP, S=s}$, $N_{FP, S=s}$, $N_{TN, S=s}$ and $N_{FN, S=s}$ as shown in Figure \ref{figBasicECs}. The total number of per sensitive group cases is $N_s = N_{S=s} = N_{TP, S=s} + N_{FP, S=s} + N_{TN, S=s} + N_{FN, S=s}$. Note that here we introduced and used $N_s$ instead of $N_{S=s}$ to simplify the notation. From now on, we will mainly use this simplified notation, i.e., use subscript $s$ (or its values 0 and 1) instead of subscript $S=s$ (or $S=0$ and $S=1$), wherever there is only one subscript and that subscript involves sensitive feature $S$. Total number of cases is $N_t = N_1 + N_0$. The total number of cases $N_t$ (or the number of per social group cases $N_s$) is the same at the input (in the ground truth space) and the output (in the prediction space) of the corresponding binary channel model. The fractions of sensitive group $S=s$ are $\pi_s = \frac{N_s}{N_t}$, where  $s \in \{0, 1 \}$. Note that $\pi_0 = 1-\pi_1$. The frequentist probabilities (or fractions) relevant to this study are summarized in Table \ref{tabParam} per social group $S=s$, where  $s \in \{0, 1 \}$.   

\begin{figure}[!t]
\centering
\includegraphics[scale=0.25]{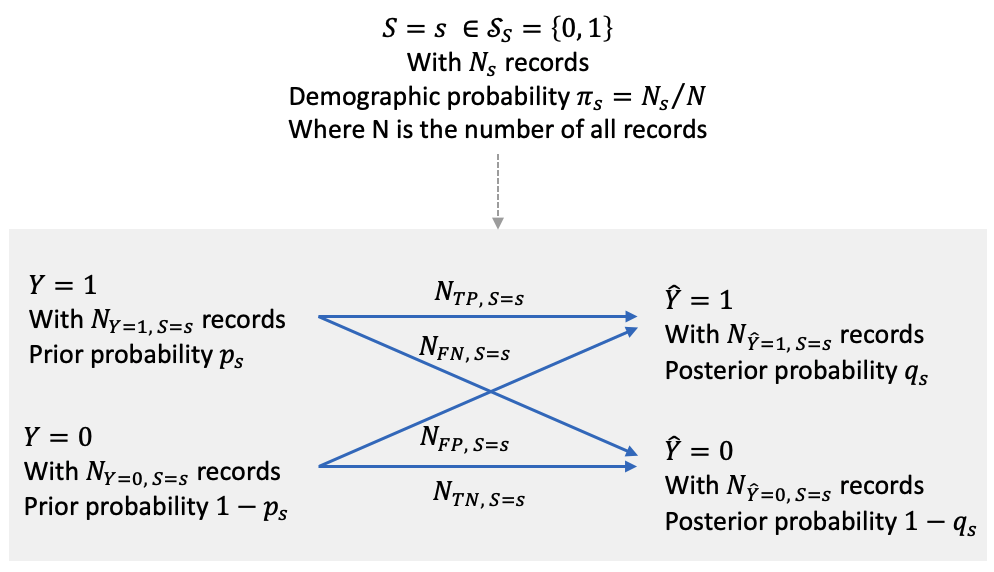}
\caption{An illustration of the main parameters of the binary channel model.}
\label{figBasicECs}
\end{figure}

\begin{table*}[!t]
\centering
\caption{Parameters of the binary channel model from various viewpoints per social group}
\label{tabParam}
\begin{tabular}{@{}m{0.08\textwidth} p{0.42\textwidth} p{0.45\textwidth}}
\toprule
Viewpoint	& Symbol/relation	& Terminology	\\
\midrule
Generic	& 	\begin{tabular}{@{}p{0.43\textwidth} p{0.45\textwidth}}
			$N_{TP, S=s}$																& 	Number of True Positives for group $s \in \{0, 1\}$ 			\\
			$N_{TN, S=s}$																& 	Number of True Negatives for group $s \in \{0, 1\}$			\\
			$N_{FP, S=s}$																& 	Number of False Positives for group $s \in \{0, 1\}$			\\
			$N_{FN, S=s}$																& 	Number of False Negatives for group $s \in \{0, 1\}$			\\
			$N_{s}= N_{S=s}= N_{TP, S=s}+N_{FN, S=s}+N_{FP, S=s}+N_{TN, S=s}$				& 	Number of all records for group $s \in \{0, 1\}$ 				\\
			$N_{t} = N_{S=0} + N_{S=1} = N_0 + N_1$										& 	Number of all records 	 							\\ 
			$ \pi_s = \pi_{S=s}= \frac{N_s}{N_t}$ (note:  $\pi_0 = 1 - \pi_1$) 						& 	Demographic-rate for group $s \in \{0, 1\}$ 				\\ 						
			\end{tabular} 																													\\
\midrule
Input	 space					& 	\begin{tabular}{@{}p{0.42\textwidth} p{0.46\textwidth}}
													$N_{Y=1, S=s}= N_{TP, S=s} + N_{FN, S=s}$									& Number of desired input classes for group $s \in \{0, 1\}$ 					\\
												 	$N_{Y=0, S=s} = N_s - N_{Y=1, S=s} = N_{FP, S=s} + N_{TN, S=s}$					& Number of undesired input classes for group $s \in \{0, 1\}$ 				\\
													$FPR_s = FPR_{S=s} = \frac{N_{FP, S=s}}{N_{Y=0, S=s}}$ 						& False Positive Rate for group $s \in \{0, 1\}$ 							\\
													$FNR_s = FNR_{S=s} = \frac{N_{FN, S=s}}{N_{Y=1, S=s}}$ 						& False Negative Rate for group $s \in \{0, 1\}$ 							\\
													$TPR_s = TPR_{S=s} = 1 - FNR_s = \frac{N_{TP, S=s}}{N_{Y=1, S=s}}$ 				& True Positive Rate for group $s \in \{0, 1\}$ 							\\
													$p_s = p_{S=s} = \frac{N_{Y=1, S=s}}{N_s}$ (and $1-p_s=\frac{N_{Y=0, S=s}}{N_s}$)	& Base-rate or prior probability for group $s \in \{0, 1\}$		\\
													\end{tabular} 																														\\
\midrule
Output space					& 	\begin{tabular}{@{}p{0.42\textwidth} p{0.46\textwidth}}
												 	$N_{\hat{Y}=1, S=s} = N_{TP, S=s} + N_{FP, S=s}$										& Number of desired output classes for group $s \in \{0, 1\}$ 			\\
													$N_{\hat{Y}=0, S=s}  = N_s - N_{\hat{Y}=1, S=s} = N_{FN, S=s} + N_{TN, S=s}$				& Number of undesired output classes for group $s \in \{0, 1\}$ 		\\
													$q_s = q_{S=s} = \frac{N_{\hat{Y}=1, S=s}}{N_s}$ (and $1-q_s = \frac{N_{\hat{Y} =0, S=s}}{N_s}$)	& Posterior probability for group $s \in \{0, 1\}$ 		\\ 																\end{tabular} 																														\\
\bottomrule
\end{tabular}
\end{table*}

\begin{example}
For the bank mortgage example, let's assume that the size (i.e., the number of all records) of a test dataset is $N_t=8000$. $N_t$ is the number of mortgage applications in the test dataset. The dataset contains the records of two social groups: The underprivileged group (i.e., $S=1$) of young workforce with $N_1 = 2000$ records and the privileged group (i.e., $S=0$) of experienced workforce with $N_0=6000$ records. From the unprivileged and privileged groups in the test dataset, 200 and 2000 applicants, respectively, have paid their mortgages back in due time. Thus, the base-rates for these groups, i.e., the within group fractions of those who successfully paid back their mortgages in due time, are $p_1 =\frac{N_{Y=1, S=1}}{N_1} = \frac{200}{2000} = 0.1$ and $p_0 =\frac{N_{Y=1, S=0}}{N_0} =  \frac{2000}{6000} = 0.33$. In this hypothetical example, we assume that the members of the privileged group (i.e., the experienced workforce) have paid back their mortgages in due time more often than the unprivileged group members of young workforces have done due to, for example, their relatively higher financial stability and income. The demographic-rates are $\pi_1 = \frac{N_1}{N_t} = \frac{2000}{8000} = 0.25$ and $\pi_0 = \frac{N_0}{N_t} = \frac{6000}{8000} = 0.75$.	

In Table \ref{tabIllustrativeExampleCM-OP3}, we present the values of the parameters of this case, named as Operation Point A, which are given or derived based on relations defined in Table \ref{tabParam}. 
$\Box$

\begin{table}[b]
\centering
\caption{A numeric instance for operation point A}
\label{tabIllustrativeExampleCM-OP3}
\begin{tabular}{c || c | c }
								&		group $S=0$			&	group $S=1$						\\	
\toprule
\multicolumn{3}{l}{Given parameter values}   																\\
\bottomrule
$N_t$								&	\multicolumn{2}{c}{8000} 						 					\\
\hline
$N_{s} $							&		6000					&	2000								\\
$N_{Y=1, S=s}$					&		2000					&	200								\\
$N_{TP, S=s} $						&		600					&	60								\\
$N_{FN, S=s} $						&		1400					&	140								\\
$N_{FP, S=s} $						&		1200					&	540								\\
$N_{TN, S=s} $						&		2800					&	1260								\\
\bottomrule
 \multicolumn{3}{l}{Calculated parameter values (based on relations defined in Table \ref{tabParam})}   					 \\
\toprule
$ \pi_s = \frac{N_s}{N_t}$				&	$\frac{6000}{8000} = 0.75$		&	$\frac{2000}{8000} = 0.25$		\\
$p_{s} =\frac{N_{Y=1, S=s}}{N_s}$		&	$\frac{2000}{6000} = 0.33$		&	$\frac{200}{2000} = 0.1$			\\
\bottomrule
\end{tabular}
\end{table}
\end{example}

As mentioned, in this study we assume two sensitive groups namely unprotected (or socially privileged) group $S=0$ and protected (or socially unprivileged) group $S=1$. For these groups we would like to study the relation between two formal fairness definitions called Statistical-Parity and Equalized-Odds. While the motivations for and detailed definitions of these measures will be provided in the following section, we summarize their frequentist probability definitions in Table \ref{tabParam} (see the rows corresponding to $FPR_s$, $TPR_s$ and $q_s$).

\subsection{Related Work} 
\label{subsec23} 

Several works give overviews about fairness metrics and measures from different perspectives, like \cite{bibVerma, binBalayn, mehrabi2021} about formal fairness measures in general and \cite{bibZafar, carey2023} about statistical fairness measures in particular. Concerning the scope of our paper, there is a rising number of studies that focus on the inconsistency among fairness measures and making trade-offs among them. For example, some works investigate the inherent trade-offs between formal metrics that quantify group and individual fairness like \cite{bibDwork, binns2020a}. Moreover, there are similar works to ours that investigate the incompatibility between some statistical fairness measures and argue about the need for making trade-offs among them. In the following, we review some of these works.   

Similar to our work, \cite{bibChouldechova} describes the conflict between two statistical fairness measures for binary classification when there is base-rate imbalance. The conflict between these fairness measures are surfaced after the COMPAS tool investigation. The work in \cite{bibKleinberg} extends the result of \cite{bibChouldechova} to a more generic case of soft output algorithmic decisions (i.e., risk scores). Both results of \cite{bibChouldechova, bibKleinberg} show the need for making trade-offs between contending measures. Making a tradeoff is investigated in, for example, \cite{bibReich2021}. Similarly, the study in \cite{hellman2020} reconsiders the legal and philosophical rationale behind the fairness measures that are shown to be in conflict in the case of the COMPAS tool. Unlike our work that concerns Equalized-Odds and Statistical-Parity measures in the presence of base-rate imbalance, the works in \cite{bibChouldechova, bibKleinberg, hellman2020} concerns two measures of Equalized-Odds and Predictive-Parity measures. 

In \cite{bibZhao2022}, it is argued that making an exact tradeoff between fairness and accuracy is not fully clear, even for classification problems. The paper investigates the inherent tradeoff between Statistical-Parity and accuracy in classification in presence of base-rate imbalance. The study provides a lower bound on accuracy for any fair classifier, meaning that when "the base-rates differ among groups, then any fair classifier satisfying Statistical-Parity has to incur a large error on at least one of the groups" \cite{bibZhao2022}. Our work is similar to \cite{bibZhao2022} but we consider a different pair of measures (i.e., Statistical-Parity and Equalized-Odds) and investigate how they interact, i.e., under which conditions they are compatible or not. Our choice of the pair of measures is motivated by the current practice and we extensively elaborate on how our results can impact the practice. In other words, our aim is to contribute to ongoing discourse, debate and practice by providing insight from technical discourse. To this end, we reconsider the non-ground-truth based Statistical-Parity measure of fairness through the lenses of ground-truth based Equalized-Odds measures, which are, in turn, based on FNR and FPR equality measures. The insight gained can be applicable at two levels namely when such ground truths are available and when they are not. 

The authors in \cite{bibGarg2020} use a mathematical framework to identify how different fairness measures relate to one another in being compatible or mutually exclusive. One of these relationships described in \cite{bibGarg2020}, like in our work, is between Statistical Parity and Equalized-Odds in presence of base-rate imballance. Unlike the mathematical framework in \cite{bibGarg2020}, our formalism is based on linear relations between these measures, which organically renders to a graphical representation of the relations between these measures. As such, our approach delivers a visual (and intuitive) sketch or representation of these relations, which can be used for an intuitive design or interpretation of the trade-offs between these measures. Further, unlike \cite{bibGarg2020}, we link our results to current legal frameworks and practices, especially in the area of selective labelling \cite{bibLakkaraju2017, bibArnold2025} and random preselections.

\section{Problem Description}
\label{sec4}
In this section, we present the theoretical basis of the study, upon which we motivate the problem investigated in this contribution. We provide the formal definitions and application areas of Statistical-Parity and Equalized-Odds in Sections \ref{subsec41} and \ref{subsec42}, respectively. We explain why establishing each of these fairness measures or both is important and illustrate their compatibility issues with a couple of examples. Note that we will investigate these compatibility issues analytically in Section \ref{sec5}. 

\subsection{Statistical-Parity Based Fairness}
\label{subsec41} 
One formal measure defined for dealing with disparate impact on groups is \textit{Statistical-Parity} \cite{bibDwork}; which is also referred to as equal acceptance rate \cite{bibZliobaite}, group fairness \cite{bibDwork}, demographic parity \cite{mehrabi2021}, and benchmarking \cite{bibSimoiu}. An advantage of this group-based measure is that it does not rely on ground-truth, i.e., (objectively) labeled target attributes, which is often scarce in practice. Statistical-Parity, see \cite{bibVerma, bibZafar}, can be defined as:

\begin{equation}
\label{eqStatiticalParity}
Pr(\hat{Y}=1 | \ S=1 ) = Pr (\hat{Y}=1 |\ S= 0),
\end{equation}
which, using the notation shown in Table \ref{tabParam}, can be written as 
\begin{equation}
\label{eqStatiticalParity2}
q_1 = q_0,
\end{equation}
because $q_0 = Pr (\hat{Y}=1 |\ S= 0)$ and $q_1 = Pr(\hat{Y}=1 | S=1)$.  

Establishing fairness according to Statistical-Parity defined in (\ref{eqStatiticalParity}) implies that individuals in both protected and unprotected groups are assigned to the positive predicted class with equal probability \cite{bibDwork}. Furthermore, the fact that an individual is assigned to the positive predicted class $\hat{Y}=1$ provides no information as to whether the individual is a member of which social group (i.e., whether the individual belongs to group $S=1$ or $S=0$). This definition imposes a fairness property to groups and as such it is a group fairness metric rather than an individual fairness metric. A related fairness measure to Statistical-Parity is the \textit{Calders-Verwer (CV) score} \cite{bibCalders} (and its generalization given in \cite{bibGalhotra}).

To indicate the applicability of Statistical-Parity measure, we elaborate on its relation to the so-called \emph{Demographic Population Representativity} \cite{bibClemmensen2022}. The Demographic Population Representativity measure is used in practice and recommended in some legal frameworks. For example, it is used in investigating whether students who received a Dutch government grant for their higher eduction were truly eligible for the grant, see  \cite{bibAlgorithmAudit2024}. This is a typical case of selective labelling \cite{bibLakkaraju2017, bibArnold2025}. As another example, Statistical-Parity is used in \cite{bibJong2005} to study minority overrepresentation in Dutch Criminal Justice System. An example legal framework that advocates using the measure is the anti-discrimination assessment framework of the Netherlands Institute for Human Rights, see \cite{bibCRM2025} page 24. Here, investigating Demographic Population Representativity is required in evaluating indirect discrimination in situations where profiling is used for preselecting individuals for detailed investigation. In the following, we adopt the way that the Demographic Population Representativity is applied into practice in report \cite{bibAlgorithmAudit2024} and show that, as such, it is equivalent to Statistical-Parity.

\begin{definition}
\label{defDemogParity}
\emph{Demographic Population Representativity} requires that the fraction of the number of positive predictions for a sensitive group to the total number of positive predictions (i.e., the outcomes for which $\hat{Y} = 1$ in the prediction space at the output of the model in Figure \ref{figBasicEC}) is the same as the fraction of the size of the sensitive group to the size of the population of interest (i.e., the number of the records in the ground-truth space at the input of the model in Figure \ref{figBasicEC}).
\end{definition}
Considering the definition of Demographic Population Representativity above, we can draw the following conclusion.
\begin{lemma}
\label{lemmaDemogPar}
The definition of Demographic Population Representativity (see Definition \ref{defDemogParity}) is the same as the definition of Statistical-Parity in (\ref{eqStatiticalParity}).
\end{lemma}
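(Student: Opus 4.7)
The plan is to translate Definition \ref{defDemogParity} directly into the counting notation of Table \ref{tabParam} and then show algebraically that the resulting equation, imposed for both sensitive groups, is equivalent to the Statistical-Parity condition $q_1 = q_0$ given in (\ref{eqStatiticalParity2}). First I would write the Demographic Population Representativity condition as
\begin{equation*}
\frac{N_{\hat{Y}=1,\,S=s}}{N_{\hat{Y}=1}} \;=\; \frac{N_s}{N_t}, \qquad s\in\{0,1\},
\end{equation*}
where $N_{\hat{Y}=1} = N_{\hat{Y}=1,\,S=0} + N_{\hat{Y}=1,\,S=1}$ is the total number of positive predictions in the population of interest. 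This is exactly the mathematical rendering used in the audit reports cited just before the lemma, so nothing subtle is hidden in the translation step.

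Next I would prove the forward implication (Demographic Population Representativity $\Rightarrow$ Statistical-Parity). Cross-multiplying the displayed equation yields $N_{\hat{Y}=1,\,S=s} = \tfrac{N_s}{N_t}\,N_{\hat{Y}=1}$, and dividing both sides by $N_s$ gives
\begin{equation*}
q_s \;=\; \frac{N_{\hat{Y}=1,\,S=s}}{N_s} \;=\; \frac{N_{\hat{Y}=1}}{N_t}.
\end{equation*}
Since the right-hand side does not depend on $s$, the values $q_0$ and $q_1$ must coincide, which is precisely (\ref{eqStatiticalParity2}). For the converse I would assume $q_0 = q_1 =: q$, write $N_{\hat{Y}=1,\,S=s} = q N_s$, sum over $s$ to get $N_{\hat{Y}=1} = q N_t$, and then verify that the ratio $N_{\hat{Y}=1,\,S=s}/N_{\hat{Y}=1}$ reduces to $N_s/N_t$, recovering Definition \ref{defDemogParity}.

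Conceptually the statement is almost immediate once the definitions are written side by side, so I do not expect any deep obstacle. The only care needed is in handling the degenerate case $N_{\hat{Y}=1}=0$, where the Demographic Population Representativity ratio is undefined; I would treat this briefly by noting that in that case $q_0 = q_1 = 0$ trivially, so Statistical-Parity holds, and the equivalence is still consistent if one interprets the representativity condition as ``whenever positive predictions exist, they are distributed in proportion to group sizes''. With this caveat the two directions established above yield the claimed equivalence.
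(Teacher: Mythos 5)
Your proposal is correct and takes essentially the same route as the paper: both translate Definition \ref{defDemogParity} into the counting notation of Table \ref{tabParam} and verify by direct algebra that the resulting ratio condition is equivalent to $q_0=q_1$, in both directions. Your observation that $q_s = N_{\hat{Y}=1}/N_t$ is independent of $s$ is a slightly tidier way to conclude the forward implication, and your remark on the degenerate case $N_{\hat{Y}=1}=0$ is a detail the paper omits, but neither changes the substance of the argument.
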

\begin{proof}
We start from having Demographic Population Representativity in place. Without loss of generality, let's consider the protected group $S=1$. Considering the notations in Figure \ref{figBasicECs}, the fraction of the protected group size to the population size is $\frac{N_1}{N_t}$ and the fraction of the number of the positive predictions for the protected group to that for all groups is  $\frac{q_1 N_1}{q_1 N_1 +  q_0 N_0}$. These two fractions should be equal according to Definition \ref{defDemogParity}. Considering that $N_s = \pi_s N_t$ for $s \in \{0, 1\}$ and $\pi_0 = 1 - \pi_1$, the equality of these fractions means that    
\begin{equation*}
\frac{N_1}{N_t} = \frac{q_1 N_1}{q_1 N_1 +  q_0 N_0} \ \ \rightarrow \ \ \frac{\pi_1 N_t}{N_t} =  \frac{q_1 \pi_1 N_t}{q_1 \pi_1 N_t +  q_0 (1-\pi_1) N_t},   
\end{equation*}
which, after simplification, can be written as
\begin{equation}
q_0 (1-\pi_1) = q_1  (1 - \pi_1) \quad  \rightarrow   \quad  q_0 = q_1. 
\label{eqDemParity4}
\end{equation}
The last relation in (\ref{eqDemParity4}) corresponds to the definition of Statistical-Parity defined in (\ref{eqStatiticalParity2}) - and thus in (\ref{eqStatiticalParity}). 

Conversely, starting from Statistical-Parity (i.e., $q_0 = q_1$), we can show that $\frac{q_1 N_1}{q_1 N_1 +  q_0 N_0} = \frac{q_1 N_1}{q_1 (N_1 +  N_0)} = \frac{N_1}{N_t}$, which is the Demographic Population Representativity in Definition \ref{defDemogParity}. Thus, Statistical-Parity and Demographic Population Representativity are equivalent.
\end{proof}

This means that the Demographic Population Representativity measure, as applied to the selective labelling use cases like that of \cite{bibAlgorithmAudit2024}, boils down to the Statistical-Parity measure. This is a testimony to the usability of Statistical-Parity and therefore from now on we focus on Statistical-Parity and investigate its relation to Equalized-Odds.

\subsection{Equalized-Odds Based Fairness}
\label{subsec42} 
Another formal measure defined for dealing with disparate treatment on groups is Equalized-Odds \cite{bibHardt}, also called conditional procedure accuracy equality \cite{binBerk} and disparate mistreatment \cite{bibZafar}. This group based measure is defined as: 
\begin{equation}
\label{eqEqualizedOdds}
Pr(\hat{Y}=1 | Y=y, S=1) =  Pr(\hat{Y}=1 | Y=y, S= 0), 		
\end{equation}
where $y \in \{0, 1 \}$. It is ground-truth based as it depends on also the input attribute $Y$ in the ground-truth space. Equalized-Odds is a combination of two fairness measures namely Predictive Equality and Equal Opportunity. Both measures provide an insight in fairness from the input viewpoint of the binary channel model shown in Figure \ref{figBasicEC}. 

The \emph{Predictive Equality} \cite{bibCorbett} is defined as:
\begin{equation}
\label{eqPredictiveEquality}
Pr(\hat{Y}=1 | Y=0, S=1 ) = \ Pr(\hat{Y}=1 | Y=0, S= 0), 
\end{equation}
which, using the notation shown in Table \ref{tabParam}, can be written as:
\begin{equation}
\label{eqPredictiveEquality2}
FPR_1 = FPR_0.
\end{equation}
The latter is because $FPR_0 = Pr(\hat{Y}=1 | Y=0, S= 0)$ and $FPR_1 = Pr(\hat{Y}=1 | Y=0, S=1 )$. We will use also the term $FPR$ equality to refer to Predictive Equality. 

The \emph{Equal Opportunity} \cite{bibHardt, bibKusner} is defined as: 
\begin{equation}
\label{eqEqualOpportunity}
Pr(\hat{Y}= 1 | Y=1, S=1)  =  Pr(\hat{Y}= 1 | Y=1, S=0), 		
\end{equation}
which, using the notation shown in Table \ref{tabParam}, can be written as :
\begin{equation}
\label{eqEqualOpportunity2}
TPR_1 = TPR_0.
\end{equation}
The latter is because $TPR_0 = Pr(\hat{Y}=1 | Y=1, S= 0)$ and $TPR_1 = Pr(\hat{Y}=1 | Y=1, S=1 )$. We will use the term $TPR$ equality to refer to Equal Opportunity. 

In this contribution we will also consider the equivalence of (\ref{eqEqualOpportunity2}), which is based on so-called $FNR$ equality because $TPR = 1 - FNR$. The resulting $FNR$ equality can be written as:
\begin{equation}
\label{eqErrorRateBal}
Pr(\hat{Y}= 0 | Y=1, S=1)  =  Pr(\hat{Y}= 0 | Y=1, S=0),		
\end{equation}
which can be written in terms of $FNR$s for the protected and unprotected groups - defined in Table \ref{tabParam} - as:
\begin{equation}
\label{eqErrorRateBal2}
FNR_1 = FNR_0.
\end{equation}

To indicate the applicability of Equalized-Odds measure, we elaborate on its usage areas. The $FPR$ and $TPR$ form the axes of the plane on which the Receiver Operating Characteristic (ROC) of classifiers can be specified. Therefore, the Equalized-Odds measure can give insight in fairness based on the (per group) ROC curve(s) of binary classifiers. The area under the ROC curve is an indication of the performance of a classifier relative to a random classifier. 

In addition to pair $FPR$ and $TPR$, our analysis will be presented with the equivalent pair of $FPR$ and $FNR$ on the $FPR-FNR$ plane. The combined $FPR$ and $FNR$ equalities, which is called \emph{Error Rate Balance} in \cite{bibChouldechova}, gives insight in the false prediction behavior of a binary classifier per sensitive group. False predictions, which can be of $FP$ and $FN$ types, can undesirably (i.e., either adversely or favorably) impact individuals who are falsely preselected and passed to the downstream stages of a decision-making process. (Not) going through the follow up stages can sometimes cause even long term harms on involved individuals. The impact of false predictions can be substantial when the follow-up stages of the decision-making process are heavily costly or highly favorable (e.g., the $FP$s and $FN$s, respectively, that occur in selective labelling cases). False predictions, therefore, can impact individuals and, at large, the society adversely, diminishing public trust in civil institutions who deploy such decision-making processes. 

\begin{example}
Following the bank mortgage example and based on the given parameters in Table \ref{tabIllustrativeExampleCM-OP3}, we present the calculated values of the parameters Statistical-Parity and Equalized-Odds for three Operation Points A, B and C in Table \ref{tabIllustrativeExampleCM-OP2}, where values of Operation A are from the previous example. The values for these operation points in Table \ref{tabIllustrativeExampleCM-OP2} are derived based on the relations in Table \ref{tabParam}.

\begin{sidewaystable}
\centering
\caption{A numeric instance for operation points A, B and C of the running example}
\label{tabIllustrativeExampleCM-OP2}
\begin{tabular}{c || c | c || c | c || c | c}
\toprule
								&		\multicolumn{2}{c}{operation Point A} 					& 	\multicolumn{2}{c}{operation Point B}   		& 	\multicolumn{2}{c}{operation Point C}  		\\
								&		group $S=0$			&	group $S=1$				&	group $S=0$		&	group $S=1$		&	group $S=0$		&	group $S=1$		\\	
\bottomrule
 \multicolumn{7}{l}{Given parameter values (for Operation Points A, B and C}   							 \\
\toprule
$N_{t}$							&	\multicolumn{2}{c||}{8000} 						 			& 	   			\multicolumn{2}{c||}{8000} 		& 	   			\multicolumn{2}{c}{8000} 		\\
$N_{TP, S=s} $						&		600					&	60						&		1400			&	140				&	1400			&	140				\\
$N_{FN, S=s} $						&		1400					&	140						&		600			&	60				&	600			&	60				\\
$N_{FP, S=s} $						&		1200					&	540						&		1200			&	540				&	400			&	460				\\
$N_{TN, S=s} $						&		2800					&	1260						&		2800			&	1260				&	3600			&	1340				\\
$N_{s} $							&		6000					&	2000						&		6000			&	2000				&	6000			&	2000				\\
$N_{Y=1, S=s}$					&		2000					&	200						&		2000			&	200				&	2000			&	200				\\
\bottomrule
 \multicolumn{7}{l}{Calculated parameter values (based on relations defined in Table \ref{tabParam})}   						 \\
\toprule
$ \pi_s = \frac{N_s}{N_t}$					&	$\frac{6000}{8000} = 0.75$		&	$\frac{2000}{8000} = 0.25$		&	$\frac{6000}{8000} = 0.75$			&	$\frac{2000}{8000} = 0.25$		&	$\frac{6000}{8000} = 0.75$			&	$\frac{2000}{8000} = 0.25$	\\
$p_{s} =\frac{N_{Y=1, S=s}}{N_s}$			&	$\frac{2000}{6000} = 0.33$		&	$\frac{200}{2000} = 0.1$			&	$\frac{2000}{6000} = 0.33$			&	$\frac{200}{2000} = 0.1$			&	$\frac{2000}{6000} = 0.33$			&	$\frac{200}{2000} = 0.1$		\\
\hline
$q_{s} =\frac{N_{\hat{Y}=1, S=s}}{N_s}$		&	$\frac{600+1200}{6000} = 0.3$		&	$\frac{60+540}{2000} = 0.3$		&	$\frac{1400+1200}{6000} = 0.43$		&	$\frac{140+540}{2000} = 0.33$		&	$\frac{1400+600}{6000} = 0.3$	&	$\frac{140+460}{2000} = 0.3$		\\
$FNR_s = \frac{N_{FN, S=s}}{N_{Y=1, S=s}}$	&	$\frac{1400}{2000} = 0.7$			&	$\frac{140}{200} = 0.7$			&	$\frac{600}{2000} = 0.3$				&	$\frac{60}{200} = 0.3$			&	$\frac{600}{2000} = 0.3$		&	$\frac{60}{200} = 0.3$		\\
$TPR_s = \frac{N_{TP, S=s}}{N_{Y=1, S=s}}$	&	$\frac{600}{2000} = 0.3$			&	$\frac{60}{200} = 0.3$			&	$\frac{1400}{2000} = 0.7$				&	$\frac{140}{200} = 0.7$			& 	$\frac{1400}{2000} = 0.7$		&	$\frac{140}{200} = 0.7$		\\
$FPR_s = \frac{N_{FP, S=s}}{N_{Y=0, S=s}}$	&	$\frac{1200}{6000-2000} = 0.3$		& 	$\frac{540}{2000-200} = 0.3$		&	$\frac{1200}{6000-2000} = 0.3$			& 	$\frac{540}{2000-200} = 0.3$		&	$\frac{400}{6000-2000} = 0.1$			& 	$\frac{460}{2000-200} = 0.2556$		\\
\bottomrule
\end{tabular}
\end{sidewaystable}

The Statistical-Parity requires that the ratio of positive predictions per group are the same, i.e., $q_0 = q_1$ - see (\ref{eqStatiticalParity2}). This means we have a classifier that produces positive outcomes (e.g., predicting that persons are able to pay back their mortgages in due time) with the same proportion for both privileged and unprivileged groups. Thus, looking at the outcome of the classifier, the ratio of the number of 1's per group to the size of the group is the same for both groups. For example, 30\% of either group is preselected at Operation Point A (i.e., assigned as trustworthy for on-time paying back their mortgages) and their mortgage applications are allowed to proceed to the following stages of the decision-making process. This measure does not rely on the values of the ground truth $Y$. 

Both Equalized-Odds measures, i.e., $FPR$ equality and $TPR$ equality - see (\ref{eqErrorRateBal2}) and (\ref{eqEqualOpportunity2}) - do rely on knowing the values of the ground truth $Y$. The $FPR$ equality captures the proportions of those in the test dataset who did not pay their mortgage depths in due time but are marked by the classifier otherwise (i.e., as being able to pay back their mortgage in due time). For example, at both Operation Points A and B, 30\% of either group is wrongly classified as trustworthy for on-time paying back their mortgages. For Operation Point B this leads to $q_0  > q_1$, which is a favoritism for the members of the privileged group from the Statistical-Parity viewpoint. At Operation Point C, the Statistical-Parity is preserved, but as a result $FPR_0 < FPR_1$, which is a favoritism for the members of the unprivileged group from the $FPR$ equality viewpoint. 

The $FNR=1-TPR$ captures the proportions of those in the test dataset who did pay their mortgage debts in due time but are marked by the classifier otherwise (i.e., as not being able to pay back their mortgage in due time). For example, at both Operation Point A and B, 70\% and 30\%, respectively, of either group is wrongly classified as untrustworthy for on-time paying back their mortgages. This is an unfair treatment of individuals whose mortgage applications are wrongfully disapproved by the classifier, thus not allowed to proceed to the following stages of the mortgage process. In this example, the impact on algorithmic disapproval is severe for individuals who are eligible for receiving mortgages. 

Let's assume that we have a classifier (or a pair of classifiers) for each of the Operation Points A, B and C in Table \ref{tabIllustrativeExampleCM-OP2}. The operation Point A, where all the three criteria (i.e., the $FPR$, the $FNR$ and the posterior probability $q$) are the same for both groups, can be seen as a fair point from the viewpoint of these three criteria. This does not hold for Operation Points B and C, where one of these measures does not hold. In situations where discrepancy in any of these measures is unacceptable, the classifiers corresponding to Operation Points B and C are unfair to use. Thus, when the base-rates are different, it is not always possible to establish all three fairness criteria among the sensitive groups. $\Box$
\end{example}

In this contribution, we analyze and give insight in when and how these fairness relations are (in)compatible. 

\section{(In)compatibility Analysis}
\label{sec5} 
In this section, we use our own approach to analyze the relations between, on the one hand, Statistical-Parity measure defined in (\ref{eqStatiticalParity} or \ref{eqStatiticalParity2}) and, on the other hand, the Equalized-Odds measures defined in Relations (\ref{eqPredictiveEquality} or \ref{eqPredictiveEquality2}) and (\ref{eqEqualOpportunity} or \ref{eqEqualOpportunity2}) $-$ or the equivalence of the latter defined in (\ref{eqErrorRateBal} or \ref{eqErrorRateBal2}). To this end, we start in Section \ref{subsec51} with assuming that the Equalized-Odds measures are in place and discuss how this would affect the Statistical-Parity measure for protected and unprotected groups. Subsequently, we investigate the opposite in Section \ref{subsec52} by assuming that the Statistical-Parity measure is in place and discuss how this would affect the Equalized-Odds measures for protected and unprotected groups. 

\subsection{Having Equalized-Odds in Place}
\label{subsec51} 

Let's assume that we have a training data set with objectively labeled class attribute $Y$ (i.e., having the values of attribute $Y$ in the ground-truth space).  With objectively labeled class attribute we mean that "historical data contain no discrimination"  \cite{bibZliobaite2017}. In some situations this assumption is realistic like in our running example where the label denotes whether individuals have actually paid back their loans in due time or not. This assumption, however, might be unrealistic in historical human decision makings, like making hiring decisions. In such cases the ground truth based fairness measures "should be considered with caution" \cite{bibZliobaite2017}. See \cite{binChoenni2025} for an elaborated typology of ground-truth types. 

Further, we assume that the training results in a classification model with the same Equalized-Odds measures (i.e., $FPR$s and $TPR$s) for both protected and unprotected groups. Considering (\ref{eqPredictiveEquality2}) and (\ref{eqEqualOpportunity2}), the latter assumption can be expressed as $FPR_1 = FPR_0$ and $TPR_1 = TPR_0$. 
The theorem below specifies when meeting Equalized-Odds measures results in meeting Statistical-Parity defined in (\ref{eqStatiticalParity2}) or (\ref{eqStatiticalParity}). 

\begin{theorem}
\label{theoremERB}
When both Equalized-Odds measures are established for protected and unprotected groups, i.e., $FPR_1 = FPR_0 = FPR_{\ast}$ and $TPR_1 = TPR_0= TPR_{\ast}$, then having also Statistical-Parity requires that 
\begin{equation}
\text{or} \quad \begin{cases*}
p_0 = p_1 & \\ 
TPR_{\ast} = FPR_{\ast}. &
\end{cases*}
\label{eqEqBaseRate}
\end{equation}
\end{theorem}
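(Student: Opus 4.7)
The plan is to express the posterior probability $q_s$ as a linear combination of the base-rate $p_s$ with coefficients determined by $TPR_s$ and $FPR_s$, and then exploit the fact that the Equalized-Odds hypothesis forces those coefficients to be the same across both sensitive groups. Once the relation is reduced to a product of two factors equalling zero, the dichotomy in (\ref{eqEqBaseRate}) follows immediately.

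Concretely, I would first decompose $N_{\hat{Y}=1,S=s} = N_{TP,S=s} + N_{FP,S=s}$ and, using the definitions in Table \ref{tabParam}, rewrite each summand as
\begin{equation*}
N_{TP,S=s} = TPR_s \cdot N_{Y=1,S=s} = TPR_s \cdot p_s \cdot N_s, \qquad N_{FP,S=s} = FPR_s \cdot N_{Y=0,S=s} = FPR_s \cdot (1-p_s) \cdot N_s.
\end{equation*}
Dividing by $N_s$ yields the key identity $q_s = TPR_s \cdot p_s + FPR_s \cdot (1-p_s)$, which is just the total probability decomposition of $\Pr(\hat{Y}=1\mid S=s)$ conditioned on $Y$.

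Next, I would substitute the Equalized-Odds hypothesis $TPR_s = TPR_\ast$ and $FPR_s = FPR_\ast$ for both $s\in\{0,1\}$. This gives the affine-in-$p_s$ formula
\begin{equation*}
q_s = FPR_\ast + (TPR_\ast - FPR_\ast)\,p_s.
\end{equation*}
Imposing Statistical-Parity $q_1 = q_0$ then collapses to the single equation
\begin{equation*}
(TPR_\ast - FPR_\ast)(p_1 - p_0) = 0,
\end{equation*}
from which the disjunction $p_0 = p_1$ or $TPR_\ast = FPR_\ast$ is immediate.

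I do not foresee a genuine obstacle here: the argument is essentially a one-line algebraic manipulation once the linear representation of $q_s$ is in hand. The only thing worth pausing on is confirming that the $q_s$ identity is consistent with the definitions in Table \ref{tabParam} (in particular that $N_{Y=0,S=s} = (1-p_s)N_s$), so I would spell out that step carefully rather than treat it as self-evident. The result has a clean interpretation that will be useful later: when Equalized-Odds holds, $q_s$ is an affine function of the base-rate with common slope $TPR_\ast - FPR_\ast$, so equalizing $q_s$ across groups is possible only if the slope vanishes (a degenerate, essentially random classifier) or the base-rates already agree.
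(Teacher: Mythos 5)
Your proposal is correct and follows essentially the same route as the paper: both hinge on the identity $q_s = p_s\,TPR_s + (1-p_s)\,FPR_s$ (the paper obtains it via the law of total probability, you via the equivalent frequentist count decomposition) and then reduce Statistical-Parity to $(p_0-p_1)(TPR_\ast - FPR_\ast)=0$. No substantive difference.
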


\begin{proof} 
Assume that the Equalized-Odds measures, see (\ref{eqPredictiveEquality2}) and (\ref{eqEqualOpportunity2}), are established for both protected and unprotected groups and denote their values by $FPR_{\ast} = FPR_1 = FPR_0$ and $TPR_{\ast} = TPR_1 = TPR_0$. Then, the probability of positive prediction outcomes (i.e., $\hat{Y}=1$) for group $S=0$ can be written based on the law of total probability as:
\begin{equation}
\begin{split}
&Pr(\hat{Y} = 1 | S=0 ) = 	\\
&Pr(Y= 1 | S=0 ) \cdot Pr(\hat{Y}=1   | Y=1, S=0) \ + \\
&Pr(Y= 0 | S=0 ) \cdot Pr(\hat{Y}=1 | Y=0, S=0).   
\end{split}
\label{eqQ01} 
\end{equation}
Substituting the equivalent parameters in Relation (\ref{eqQ01}) from the notations adopted in Table \ref{tabParam} and Figure \ref{figBasicECs}, the posterior probability for the outcome $\hat{Y}=1$ of group $S=0$ can be written as:
\begin{equation}
q_0  	=  p_0 TPR_{\ast} + (1 - p_0) FPR_{\ast}.
\label{eqQ0} 
\end{equation}
Similarly, for group $S=1$ we have: 
\begin{equation}
q_1  	=  p_1 TPR_{\ast} + (1 - p_1) FPR_{\ast}.
\label{eqQ1} 
\end{equation}
To establish also Statistical-Parity means that $q_0 = q_1$ according to (\ref{eqStatiticalParity2}). Substituting from (\ref{eqQ0}) and (\ref{eqQ1}) in $q_0 = q_1$ and after some manipulation we have: 
\begin{equation*}
(p_0 - p_1) \cdot (TPR_{\ast} -  FPR_{\ast}) = 0, 
\end{equation*}
which results in (\ref{eqEqBaseRate}) in order to have also Statistical-Parity satisfied.
\end{proof}

The top row in (\ref{eqEqBaseRate}) means having equal base-rate in place and the bottom row, which can be written as $1 - FNR_{\ast} = FPR_{\ast}$ or
\begin{equation}
FNR_{\ast} + FPR_{\ast} = 1,
\label{eqTradeOff} 
\end{equation}
means having a linear trade-off between $FPR$ and $FNR$. Either of these is an extra precondition needed for establishing Statistical-Parity. 

\begin{corollary}
\label{coroEqBRB}
When the bases rates are unequal, achieving Statistical-Parity (and thus Demographic Population Representativity) via establishing equal $FPR$s and $TPR$s for both protected and unprotected groups as indicated in the bottom row of (\ref{eqEqBaseRate}) (i.e., $FPR_1 = FPR_0 = TPR_1 = TPR_0 = x_{\ast}$) makes the posterior probability for outcome $\hat{Y}=1$ for each of these groups equal to $q_0 = q_1 = x_{\ast}$.    
\end{corollary}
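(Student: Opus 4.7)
The plan is to deduce the corollary as a direct consequence of Theorem \ref{theoremERB} together with the already-derived expressions (\ref{eqQ0}) and (\ref{eqQ1}) for the posterior probabilities. Since the hypothesis rules out $p_0 = p_1$, only the second branch of (\ref{eqEqBaseRate}) is available, so I would start by invoking Theorem \ref{theoremERB} to conclude that $TPR_{\ast} = FPR_{\ast}$, and then combine this with the Equalized-Odds assumption $FPR_1 = FPR_0 = TPR_1 = TPR_0$ to declare a common value $x_{\ast}$ for all four of these quantities.

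Next, I would substitute $TPR_{\ast} = FPR_{\ast} = x_{\ast}$ into (\ref{eqQ0}) to obtain
\begin{equation*}
q_0 = p_0 \, x_{\ast} + (1 - p_0)\, x_{\ast} = x_{\ast},
\end{equation*}
and analogously into (\ref{eqQ1}) to obtain $q_1 = p_1\, x_{\ast} + (1 - p_1)\, x_{\ast} = x_{\ast}$. The base-rates $p_0$ and $p_1$ simply factor out of a convex combination of identical endpoints, so they disappear regardless of whether they happen to coincide. This yields $q_0 = q_1 = x_{\ast}$ as claimed, and Lemma \ref{lemmaDemogPar} then automatically delivers the Demographic Population Representativity reading.

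There is no genuine obstacle here; the corollary is essentially a one-line algebraic observation once Theorem \ref{theoremERB} is in hand. The only thing worth being careful about is the logical step that rules out the first branch of (\ref{eqEqBaseRate}) under the assumption of unequal base-rates, so that the argument is entitled to use $TPR_{\ast} = FPR_{\ast}$ rather than merely one of the two conditions. Otherwise the proof is a straightforward substitution into the total-probability expansion derived in the proof of Theorem \ref{theoremERB}.
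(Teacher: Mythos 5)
Your proposal is correct and matches the paper's own proof: both arguments substitute $TPR_{\ast} = FPR_{\ast} = x_{\ast}$ into (\ref{eqQ0}) and (\ref{eqQ1}) and observe that the convex combination of identical endpoints collapses to $x_{\ast}$, giving $q_0 = q_1 = x_{\ast}$. The brief preliminary step you add about ruling out the first branch of (\ref{eqEqBaseRate}) is a reasonable clarification but does not change the substance of the argument.
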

\begin{proof}
Substituting the bottom row of (\ref{eqEqBaseRate}) in (\ref{eqQ0}) results in $q_0  =  p_0 TPR_0 + (1- p_0) FPR_0 = p_0 x_{\ast} + (1-p_0) x_{\ast} = x_{\ast}$. Similarly, substituting the bottom row of (\ref{eqEqBaseRate}) in (\ref{eqQ1}) results in $q_1 = x_{\ast}$. Thus, $q_0 = q_1 = x_{\ast}$.
\end{proof}
 
\subsection{Having Statistical-Parity in Place}
\label{subsec52} 

When there is no objectively labeled target attribute, a starting point for designing fair classifiers or for evaluating their fairness is to establish or check Statistical-Parity (i.e., Demographic Population Representativity as expressed in Lemma \ref{lemmaDemogPar}) because Statistical-Parity does not rely on ground-truth (i.e., values of $Y$). This situation arises in practice (arguably) more often than the case described in Section \ref{subsec51}, as we can see in practical cases \cite{bibAlgorithmAudit2024, bibJong2005}. A question that arises in such cases is: What would be the impact of the current practice, i.e., enforcing the Statistical-Parity which means $q_0 = q_1$, on the $FPR$s and $FNR$s for protected and unprotected groups, specially when their base-rates differ (i.e., $p_0 \neq p_1$)? 

As seen in (\ref{eqQ0}) and (\ref{eqQ1}), and based on the notation in Figure \ref{figBasicECs}, we can write: 
\begin{equation}
p_s \cdot TPR_s + (1-p_s) \cdot FPR_s = q_s,
\label{eqEqBaseRateS}
\end{equation}
for $s \in \{0, 1 \}$. In this section we have assumed that the Statistical-Parity is in place (i.e., $q_0 = q_1$). To simplify the notation, we use parameter $0 \leq q_{\ast} \leq 1$ to denote $q_{\ast} = q_0 = q_1$. Thus, (\ref{eqEqBaseRateS}) becomes:
\begin{equation}
\begin{split}
& p_s \cdot TPR_s + (1-p_s) \cdot FPR_s 	= q_{\ast}  \quad \text{or} \\
& TPR_s = (1 - \frac{1}{p_s}) FPR_s + \frac{q_{\ast}}{p_s}.
\end{split}
\label{eqLinearTradeOff} 
\end{equation}

Relation (\ref{eqLinearTradeOff}) represents a system of parallel lines in the $FPR-TPR$ plane, by varying the value of parameter $q_{\ast}$ in the range of $0 \leq q_{\ast} \leq 1$ . The slope of all these lines is equal to $1-\frac{1}{p_s}$, which is a negative value because $0<p_s<1$. The generic form of the binary classification performance lines, with an arbitrary $p_s$ and a varying $q_{\ast}$, is illustrated in Figure \ref{figGSlopes}.
\begin{figure}[htbp]
\centering
\includegraphics[scale=0.25]{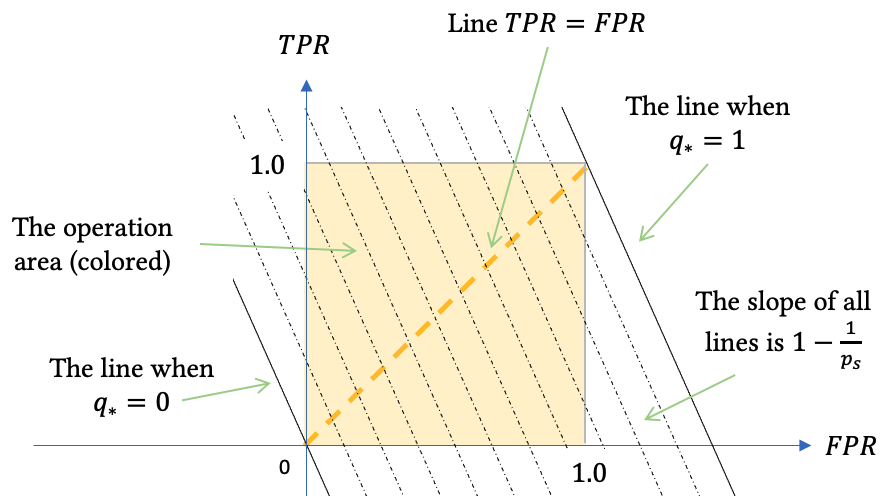}
\caption{An illustration of the generic form of the binary classification performance lines, with an arbitrary $p_s$ and a varying $q_{\ast}$.}
\label{figGSlopes}
\end{figure}

Relation (\ref{eqLinearTradeOff}) represents two linear equations and/or graphs (i.e., lines) for Equalized-Odds measure, corresponding to base-rates $p_0$ and $p_1$ in the $FPR-TPR$ plane. These lines can be given as: 
\begin{equation}
\label{eqLinearEquations}
  \left\{\begin{aligned} 
	 L_0: TPR = (1 - \frac{1}{p_0}) FPR + \frac{q_{\ast}}{p_0}  &  \  \text{for group $S=0$}, \\
   	 L_1: TPR = (1 - \frac{1}{p_1}) FPR + \frac{q_{\ast}}{p_1}  &  \  \text{for group $S=1$}.
	 \end{aligned}\right.
\end{equation}
The following theorem holds for lines $L_0$ and $L_1$.

\begin{theorem}
\label{theoremESP}
Assuming that Statistical-Parity is in place in a binary classifier (i.e., $ q_0 = q_1 = q_{\ast}$) and there is a base-rate imbalance for protected and unprotected groups  (i.e., $p_0 \neq p_1$), then the Equalized-Odds lines for these groups, see (\ref{eqLinearEquations}), cross each other at point $(q_{\ast}, q_{\ast})$ on the $FPR-TPR$ plane. 
\end{theorem}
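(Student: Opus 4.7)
The plan is to verify directly that the point $(q_{\ast}, q_{\ast})$ lies on both lines $L_0$ and $L_1$, and then argue that under the hypothesis $p_0 \neq p_1$ this intersection is unique.

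First I would substitute $FPR = q_{\ast}$ into the equation of $L_s$ from (\ref{eqLinearEquations}) and simplify:
\begin{equation*}
TPR = \left(1 - \tfrac{1}{p_s}\right) q_{\ast} + \tfrac{q_{\ast}}{p_s} = q_{\ast} - \tfrac{q_{\ast}}{p_s} + \tfrac{q_{\ast}}{p_s} = q_{\ast}.
\end{equation*}
Since this identity holds for $s = 0$ and $s = 1$ alike, the point $(q_{\ast}, q_{\ast})$ sits on both $L_0$ and $L_1$, so it is a common point of the two Equalized-Odds lines.

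Next I would establish uniqueness. The slope of $L_s$ is $1 - 1/p_s$, which is a strictly decreasing function of $p_s$ on $(0,1)$. The base-rate imbalance hypothesis $p_0 \neq p_1$ therefore forces the two slopes to differ, so $L_0$ and $L_1$ are two non-parallel lines in the $FPR$-$TPR$ plane and must meet in exactly one point. Combined with the previous step, that unique intersection is $(q_{\ast}, q_{\ast})$, which gives the claim.

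I do not anticipate a serious obstacle: the whole argument is essentially a one-line substitution plus the elementary fact that two non-parallel lines meet in a single point. The only subtlety worth flagging is the role of the hypothesis $p_0 \neq p_1$ in ensuring distinct slopes (and hence a genuine crossing rather than coincidence of $L_0$ and $L_1$), and the implicit observation that the point $(q_{\ast}, q_{\ast})$ lies on the diagonal of the unit square, which foreshadows the geometric interpretation linking it to a random-classifier operating point used later in the paper.
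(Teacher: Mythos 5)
Your proposal is correct and matches the paper's argument in substance: the paper solves the two-line system directly to find $FPR_{\ast} = q_{\ast}$ and then $TPR_{\ast} = q_{\ast}$, while you verify that $(q_{\ast}, q_{\ast})$ satisfies both equations and invoke non-parallelism (from $p_0 \neq p_1$) for uniqueness --- the same elementary facts arranged in the opposite order. No gap; the role you flag for the hypothesis $p_0 \neq p_1$ is exactly the role it plays in the paper's proof.
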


\begin{proof}
The lines $L_0$ and $L_1$ in (\ref{eqLinearEquations}) cross each other (i.e., they are not parallel lines) because their slopes differ according to the assumption (i.e., $(1 - \frac{1}{p_0}) \neq  (1 - \frac{1}{p_1})$).  Let's assume lines $L_0$ and $L_1$ cross each other at point $(FPR_{\ast}, TPR_{\ast})$ on the the $FPR-TPR$ plane. Then, we can write ( \ref{eqLinearEquations}) as:
\begin{equation}
 \label{eqLinearEquations2}
  \left\{\begin{aligned} 
 	 L_0: TPR_{\ast} = (1 - \frac{1}{p_0}) FPR_{\ast} + \frac{q_{\ast}}{p_0},  &  \\
    	 L_1: TPR_{\ast} = (1 - \frac{1}{p_1}) FPR_{\ast} + \frac{q_{\ast}}{p_1}.  &
 	\end{aligned}\right.
 \end{equation}
To find the intersection point  $(FPR_{\ast}, TPR_{\ast})$, we eliminate variable $TPR_{\ast}$ in System (\ref{eqLinearEquations2}) to obtain the parametric value of $FPR_{\ast}$ as follows:
\begin{eqnarray}
 (1 - \frac{1}{p_0}) FPR_{\ast} + \frac{q_{\ast}}{p_0} 	& 	=  	&	(1 - \frac{1}{p_1}) FPR_{\ast} + \frac{q_{\ast}}{p_1},  \nonumber \\
(\frac{1}{p_1}-\frac{1}{p_0}) FPR_{\ast} 	& 	=  	&  \frac{q_{\ast}}{p_1} - \frac{q_{\ast}}{p_0},  \nonumber \\
FPR_{\ast} 	& 	=  	&  q_{\ast}. \label{eqFirstTerm}
\end{eqnarray}
Substituting the value of $FPR_{\ast}$ obtained in (\ref{eqFirstTerm}) in one of the lines in (\ref{eqLinearEquations}), we have
\begin{equation}
TPR_{\ast} = (1 - \frac{1}{p_0}) q_{\ast} +  \frac{q_{\ast}}{p_0} = q_{\ast}.  
\label{eqSecondTerm}
\end{equation}
From (\ref{eqFirstTerm}) and (\ref{eqSecondTerm}) we can conclude that $(q_{\ast}, q_{\ast})$ is the point at which the Equalized-Odds lines for protected and unprotected groups cross each other in the $FPR-TPR$ plane. 
\end{proof}

\begin{corollary}
\label{coroESP}
When Statistical-Parity is in place in a binary classifier and there is a base-rate imbalance for protected and unprotected groups, then the intersection point of the Equalized-Odds lines for these groups lie on the line $TPR = FPR$ on the $FPR-TPR$ plane. 
\end{corollary}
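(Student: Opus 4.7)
The plan is to derive this corollary as an immediate consequence of Theorem \ref{theoremESP}, which already pinpoints the intersection point of $L_0$ and $L_1$ in closed form. Since the statement to be proved is a geometric assertion about where that intersection lies, almost all of the analytic work has been done and what remains is just reading off the coordinates.

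First, I would invoke Theorem \ref{theoremESP} directly: under the hypotheses of the corollary (Statistical-Parity in place, so $q_0 = q_1 = q_{\ast}$, and base-rate imbalance $p_0 \neq p_1$), the two Equalized-Odds lines $L_0$ and $L_1$ defined in (\ref{eqLinearEquations}) cross at the single point $(FPR_{\ast}, TPR_{\ast}) = (q_{\ast}, q_{\ast})$ in the $FPR$-$TPR$ plane. Next, I would simply observe that any point of the form $(a, a)$ satisfies the equation $TPR = FPR$, so the intersection point $(q_{\ast}, q_{\ast})$ lies on that line. This concludes the proof.

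There is essentially no main obstacle here, since the corollary is really a one-line geometric restatement of Theorem \ref{theoremESP}. The only thing worth being careful about is making sure the hypothesis $p_0 \neq p_1$ is acknowledged so that $L_0$ and $L_1$ are genuinely distinct (and thus have a unique intersection point rather than coinciding), which is exactly the condition used in the proof of Theorem \ref{theoremESP} to guarantee different slopes. Beyond that, I might add one sentence of interpretation noting that the diagonal $TPR = FPR$ is the ROC locus of a random classifier, which links the result to the discussion in Section \ref{subsec42} about the area under the ROC curve and foreshadows the trade-off implications the authors want to highlight.
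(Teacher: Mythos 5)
Your proposal is correct and matches the paper's own proof exactly: Corollary \ref{coroESP} is proved by invoking Theorem \ref{theoremESP} to identify the intersection point as $(q_{\ast}, q_{\ast})$ and then noting that equal coordinates place it on the line $TPR = FPR$. Nothing is missing.
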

\begin{proof}
The coordinates of the intersection point are equal, see (\ref{eqFirstTerm}) and (\ref{eqSecondTerm}). Thus, the point lies on line $TPR = FPR$.
\end{proof}
Note that the intersection point lying on the line $TPR = FPR$ is the same as the second condition in (\ref{eqEqBaseRate}). 
 
\begin{example}
In Table \ref{tabIllustrativeExampleCM-OP2}, we derived the values of the $FPR_s$, $TPR_s$ and $q_s$, where $s \in \{0, 1 \}$, for operation Points A, B and C based on the relations defined in Table \ref{tabParam}. Here we derive the same values based on 
\begin{equation}
\begin{cases*}
q_0  	=  p_0 TPR_0 + (1 - p_0) FPR_0, & \\ 
q_1  	=  p_1 TPR_1 + (1 - p_1) FPR_1, &
\end{cases*}
\label{eqEqBaseRateGen}
\end{equation}
which are the extended forms of (\ref{eqEqBaseRateS}) or (\ref{eqLinearTradeOff}). The results are summarized in Table \ref{tabIllustrativeExampleSec} for these Operation Points as well as for a new operation Point D, which shows another case that two out of three measures are satisfied (without mentioning its detailed statistics for brevity). 

Operation Point A has the same $FPR$, $FNR$ and $q$ values for both groups, i.e., $FPR_0 = FPR_1$,  $TPR_0 = TPR_1$ and  $q_0 = q_1$. Further, relations $FPR_s = TPR_s = q_s$ hold per group $s \in \{0, 1 \}$, as prescribed by Theorem \ref{theoremERB}, Theorem \ref{theoremESP}, and Corollary \ref{coroEqBRB}. Operation Point A can be called fair when satisfying both Statistical-Parity and Equalized-Odds is required. 

For Operation Point B the $FPR$ and $TPR$ values are the same for both groups but they do not lie on the $FPR_s = TPR_s$ line (i.e.,  $FPR_s \neq TPR_s$ for each $s \in \{0, 1 \}$). Further, this point has different values for $q_0$ and $q_1$. 

Operation Points C and D (are forced to) have the same posterior probability for outcomes 1 (i.e., $q_0=q_1$), but they do not operate at the same $FPR$ or $TPR$, respectively, per sensitive group. As such, the $FPR_s$ and $TPR_s$ values do not lie on the $FPR_s=TPR_s$ line, for $s \in \{0, 1 \}$. Operation points C and D are evidence for a naive enforcement of Statistical-Parity, in cases where disparity of $FPR$ or $TPR$ is not tolerable. For these reasons, operation Points B, C and D can be called unfair when satisfying both Statistical-Parity and Equalized-Odds is required.  $\Box$
 
\begin{table*}[htbp]
\centering
\caption{A numeric instance for the running example}
\label{tabIllustrativeExampleSec}
\begin{tabular}{c c c | c c c | c c c | c c }
\toprule	
\multicolumn{2}{c}{base-rates}	 		&	&	\multicolumn{2}{c}{False Positive Rate}	&	&	\multicolumn{2}{c}{True Positive Rate}  		&	&	 \multicolumn{2}{c}{Posterior probability} 		\\
$p_0 = 0.33$		& $p_1= 0.1$		&	&	$FPR_0$		&	 $FPR_1$			&	&	$TPR_0$	& 	$TPR_1$					&	& 	 $q_0$		& 	$q_1$	\\		
\toprule
\multicolumn{2}{c}{Point A}			&	&	$0.3$   		& 	$0.3$			&	&	$0.3$   		& 	$0.3$				&	& 	$0.3$   		& 	$0.3$	\\	
\toprule
\multicolumn{2}{c}{Point B} 			&	&	$0.3$   		&	 $0.3$			&	&	$0.7$   		& 	$0.7$				&	&	$0.43$		& 	$0.34$	\\
\toprule
\multicolumn{2}{c}{Point C} 			&	&	$0.1$		& 	$0.2556$			&	&	$0.7$   		& 	$0.7$				&	&	$0.3$   		& 	$0.3$	\\
\toprule
\multicolumn{2}{c}{Point D} 			&	&	$0.3$   		& 	$0.3$			&	&	$0.45$		& 	$0.8$				&	&	$0.35$   		& 	$0.35$	\\		
\bottomrule
\end{tabular}
\end{table*}
\end{example}

\section{Discussion}
\label{sec6} 
In Section \ref{subsec61}, we elaborate on the results obtained. Subsequently, we explain the legal and practical implications of the results in Section \ref{subsec62}.  

\subsection{Elaboration Based on Graphical Illustrations}
\label{subsec61} 

In this section, for practical reasons, our discussion will be based on ROC curves. These curves lie on the $FPR-TPR$ plane, which we have used throughout the paper to capture the relations between Equalized-Odds measures and their performance lines as defined in (\ref{eqLinearEquations}). Let's assume that the classifier is not biased for sensitive social groups in that it has the same ROC curve for both groups. This means that the classifier operates independently of attribute $S$. Here we present four cases by choosing a (per group) operation point for the classifier, i.e., choosing the tuple ($FPR_{s}, TRP_{s}, q_{s})$. These cases illustrate our results mentioned in the previous sections.

In Case I, we choose two operation points on the $TPR=FPR$ line. Figure \ref{figGSlopes0} shows lines $L_0$ (with $p_0=0.33$) and $L_1$ (with $p_1=0.1$) for the same $q_{\ast}$ for both groups. This means that $FPR_{\ast}$ and $TPR_{\ast}$ are the same for both groups and further $q_{\ast} = FPR_{\ast} = TPR_{\ast}$. In this case, as illustrated in Figure \ref{figGSlopes0}, we have chosen two Operation Points A (corresponding to the solid lines) and A' (corresponding to the dotted lines), with $q_{\ast}=0.3$ and $q'_{\ast}=0.7$, respectively. Although in this case we obtain a fair classifier at each of Operation Points A and A' from the viewpoint of the three measures of $FPR$, $TPR$, and the posterior probability of positive outcomes $q$ (thus, from the viewpoints of Equalized-Odds and Statistical-Parity measures), the resulting classifier is not useful from prediction performance viewpoint because it corresponds to a random classifier that is specified by the ROC chance line $TPR=FPR$. 

Based on the illustration in Figure \ref{figGSlopes0}, we argue that, in presence of base-rate imbalance, pushing for Statistical-Parity (see the solid or dashed pair of lines in Figure \ref{figGSlopes0}) results in one of the following options: 
\begin{itemize}
\item A random classifier that preserves also the Equalized-Odds (i.e., $FPR_s = TPR_s = q_s$ where $q_0=q_1$) as illustrated above in Case I (i.e., at Operation Point A or A' in Figure \ref{figGSlopes0}), or 
\item A classifier that does not preserve Equalized-Odds (i.e, violating $FPR$ equality, $TPR$ equality or both) to be illustrated in the following.  
\end{itemize}
In the first option (i.e., having a random classifier), seeking for a high posterior probability for outcomes $\hat{Y}=1$ for sensitive groups (i.e., for high $q_0$ and $q_1$ values) should be done cautiously since it inflicts high $FPR$s for those sensitive groups according to Corollary \ref{coroEqBRB}. The second option mentioned above might be intriguing if satisfying both of Equalized-Odds measures is not necessary. This option asks for adopting a classifier that performs (well) above the ROC chance line. In the following we elaborate on a case for this option (see Case II below). 
\begin{figure}[htbp]
\centering
\includegraphics[scale=0.26]{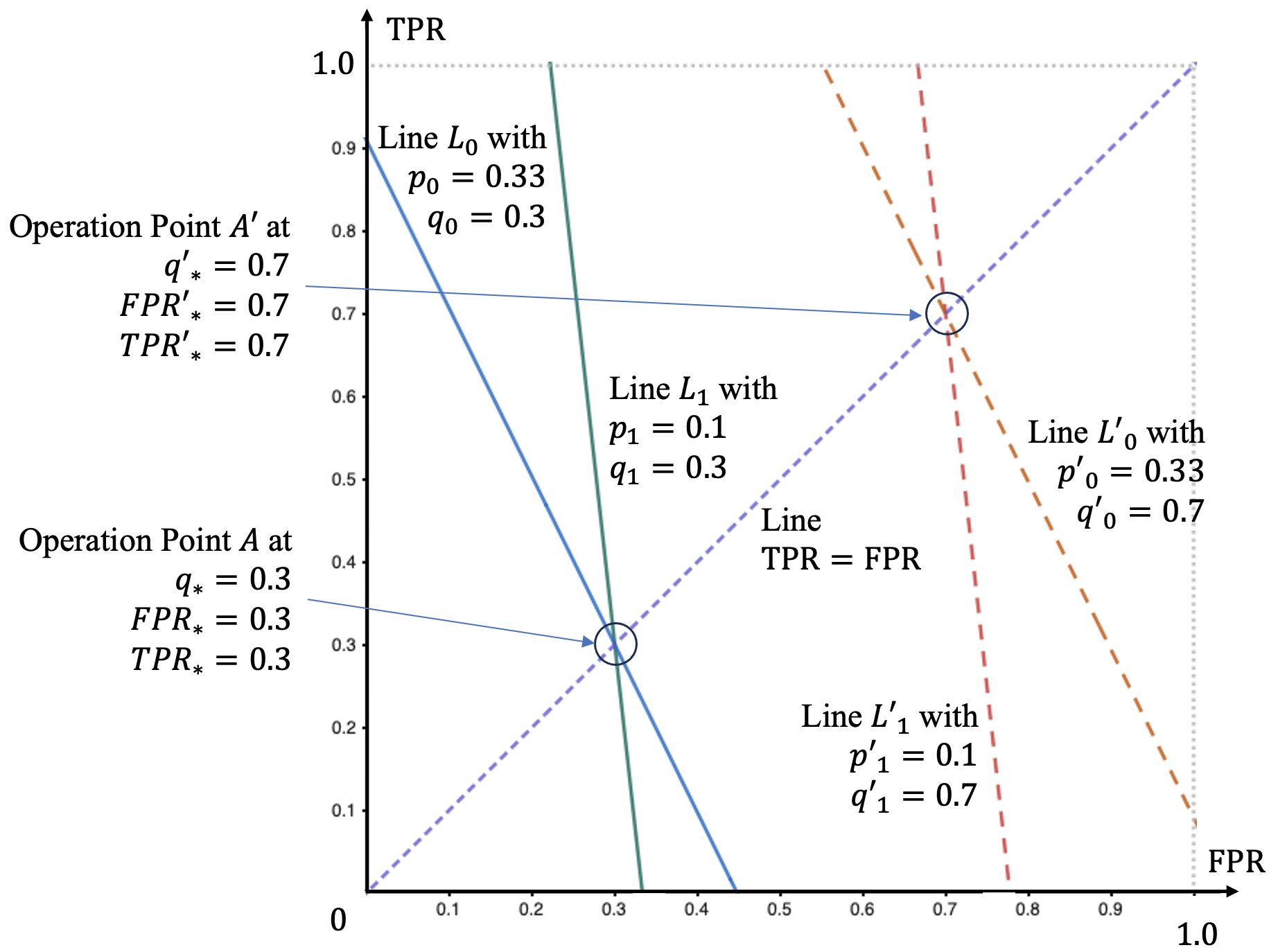}
\caption{Case I: Choosing the operation point on $TPR=FPR$ (i.e., on the ROC chance line).}
\label{figGSlopes0}
\end{figure}

When the binary classifier has a good ROC curve, i.e., the operation points lie well above the $TPR=FPR$ line, we present and explain three other illustrative cases shown in Figure \ref{figROCcurves}. In Case II, illustrated in Figure \ref{fig:sfig1}, we show that enforcing Statistical-Parity does not necessarily lead to having Equalized-Odds in place (i.e., having $FPR$ equality or $TPR$ equality) when there is base-rate imbalance. Here we enforce the same Statistical-Parity values for both groups (i.e., $q_0=q_1=0.3$), which means that we should seek the intersection points of lines $L_0$ and $L_1$ in Figure \ref{fig:sfig1} with the  ROC curve of the classifier. This results in two operation points, shown in Figure \ref{fig:sfig1}, one for group $S=0$ and and the other for group $S=1$. For these points hold $FPR_{1} \neq FPR_{0}$ and $TPR_{1} \neq TPR_{0}$ (and therefore with $FNR_{1} \neq FNR_{0}$), but with the same $q_0=q_1=0.3$. This case is possible if, for example, one chooses different thresholds for the sensitive groups. Considering the running example, the operation point for the unprivileged group appears to be more favorable than that for the privileged group, because it has higher $FPR$ and lower $FNR$. 

\begin{figure*}[htbp]
\begin{subfigure}{.33\textwidth}
  \centering
  \includegraphics[width=1.0\linewidth]{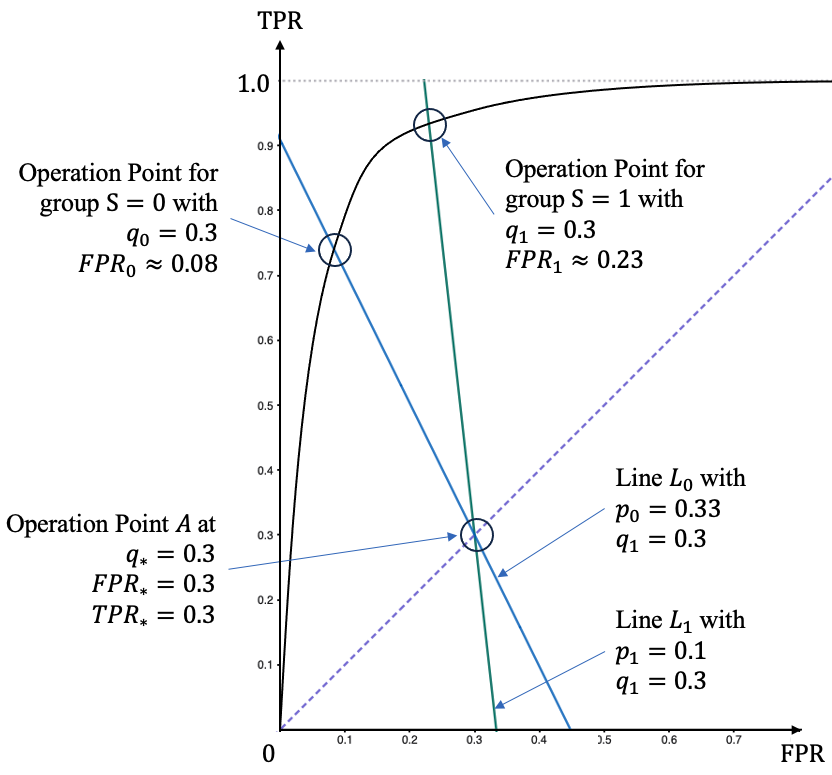}
  \caption{Case II}
  \label{fig:sfig1}
\end{subfigure}%
\begin{subfigure}{.33\textwidth}
  \centering
  \includegraphics[width=1.0\linewidth]{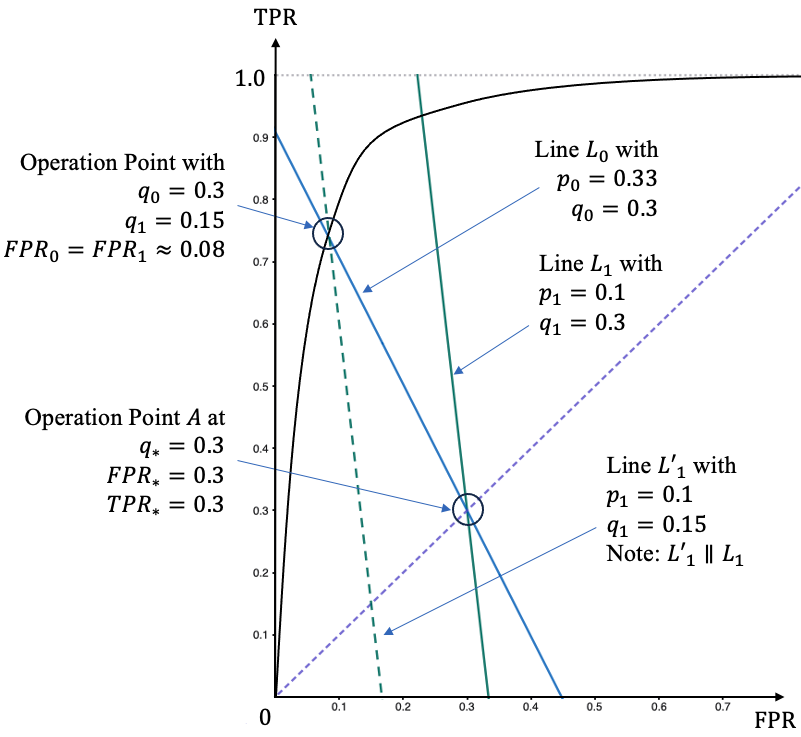}
  \caption{Case III}
  \label{fig:sfig2}
\end{subfigure}
\begin{subfigure}{.33\textwidth}
  \centering
  \includegraphics[width=1.0\linewidth]{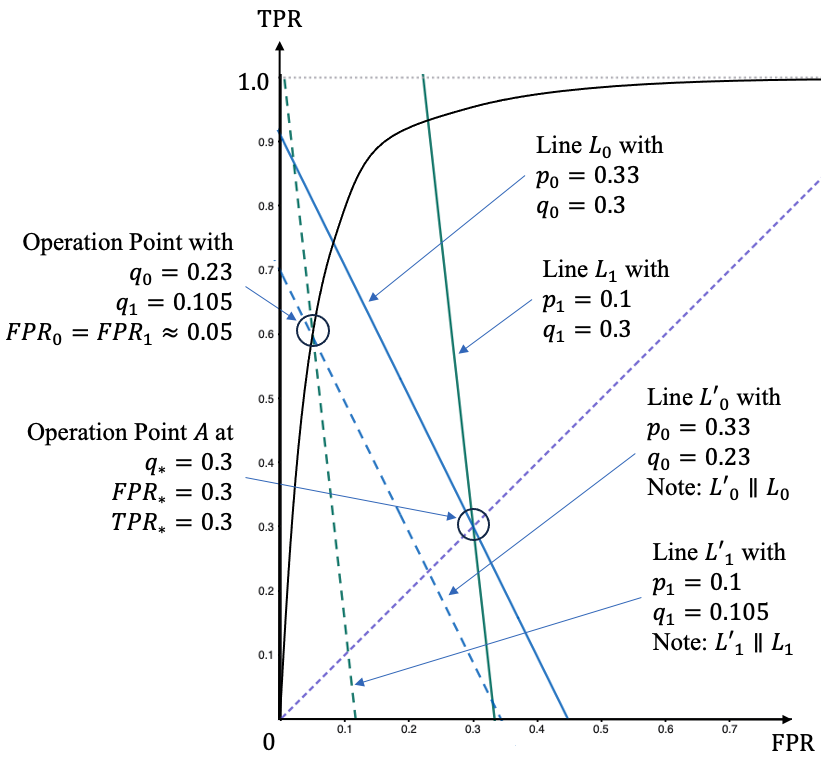}
  \caption{Case IV}
  \label{fig:sfig3}
\end{subfigure}%
\caption{An illustration of choosing the operation points on a good ROC curve.}
\label{figROCcurves}
\end{figure*}

From Theorem \ref{theoremESP} and Corollary \ref{coroESP} we can conclude that when we enforce having Statistical-Parity (thus Demographic Population Representativity) and assuming that the base-rates are different, then the chosen operation point with the same $FPR$ and $TPR$ for both groups may deliver different posterior probabilities for positive outcomes (i.e., $q_1 \neq q_0$) for sensitive groups. For example, see operation Point B in Table \ref{tabIllustrativeExampleSec}. This is because the operation point(s) of the classifier does (do) not lie on the $TPR = FPR$ line. Cases III and IV, shown in Figure \ref{figROCcurves}, are examples of this scenario. In Case III, illustrated in Figure \ref{fig:sfig2}, we choose the operation point at the intersection point of the ROC curve with line $L_0$ with $q_0=0.3$ (the same as that on Figure\ref{fig:sfig1}). This choice results in $FPR_{0} \approx 0.08$. As this is also the operation point chosen for group $S=1$, it must lie also on line $L'_1$ being parallel to $L_1$, for the rationale see (\ref{eqLinearTradeOff}). Therefore, the $q_1$ for $L_1$ and the $q_1$ for $L'_1$ differ, as indicated in Figure \ref{fig:sfig2}. Although here $FPR$ and $TPR$ equalities are established, the Statistical-Parity cannot be met, i.e., $q_0 \neq q_1$. Case IV in Figure \ref{fig:sfig3} shows a similar situation to that in Figure \ref{fig:sfig2}, but both lines $L'_0$ and $L'_1$ crossing the chosen operation point are parallel to those in Figure \ref{figGSlopes0}, again resulting in $q_0 \neq q_1$.   

One can take a step further than the cases shown in Figure \ref{figROCcurves} and, instead of having one classifier, use two different classifiers (i.e., with different ROC curves) for these groups. This scenario can be insightful when designing a pair of classifiers that preserves Statistical-Parity but do not preserve Equalized-Odds (i.e, violating $FPR$ equality, $TPR$ equality or both). We do not elaborate on this case here for brevity.  

\subsection{Practical Implications}
\label{subsec62}

As discussed in Section \ref{subsec61}, base-rate imbalance between sensitive social groups can lead to unfairness when fairness definition is based on Statistical-Parity and Equalized-Odds measures. Such a behavior is reported also for situations where fairness is defined based on Equalized-Odds and Predictive-Parity in \cite{bibChouldechova, bibKleinberg}. It is interesting to note that, as assumed in Section \ref{subsec61}, such an unfair behavior arises even if the binary classification model is independent of the sensitive attribute $S$. Thus, this type of unfair behavior is caused purely by the imbalance in the base-rate of (the data records of) the individuals for whom the classification model is instantiated to predict their class attribute $\hat{Y}$ - thus, it is a form of deployment bias. This is a sort of base-rate fallacy that holds for, in our case, Statistical-Parity and Equalized-Odds measures of fairness. According to some legal regimes like \cite{bibCRM2025}, a classifier can be assessed as fair if it does not differentiate based on a discrimination ground (i.e., with respect to sensitive attributes like race, gender and religion directly (or even indirectly). Such an assessment is incomplete if it is carried only for classification model training and testing. As shown here, one should also assess the performance of trained (supposedly non-discriminating) models based on whether the data records to which the model is applied has base-rate imbalance or not. 

It is important to investigate which formal definitions of fairness apply to a given usage context and to what extent \cite{bibBargh2025}. In Section \ref{sec4}, we explained the motivations for applying Statistical-Parity and Equalized-Odds measures to a usage context. Should applying both of these measures be necessary in a given usage context, then pursuing only one of these measures could violate the other one as we discussed in Section \ref{subsec61}. In some legal frameworks like \cite{bibCRM2025}, which aims at assessing the discrimination behavior of profiling algorithms, applying Statistical-Parity is directly recommended but there is no explicit mentioning of $FPR$ and $FNR$ related measures like Equalized-Odds. Not enforcing the latter can cause unfairness in situations where $FP$'s and $FN$'s are impactful.   

In practice there might be no data available about the real values of the class attribute $Y$. In such cases, therefore, many studies consider only Statistical-Parity when investigating fairness at the output of a classifier, like in \cite{bibAlgorithmAudit2024, bibJong2005}. From our study we can conclude that such studies cannot reveal $FPR$ and $FNR$ inequalities even if they show that Statistical-Parity is in place (see, e.g., Case II in Figure \ref{figROCcurves}). Conversely, if such studies show lack of Statistical-Parity, there might be $FPR$ and $FNR$ equalities in place (see, e.g., Cases III and IV in Figure \ref{figROCcurves}). If establishing only $FPR$ and $FNR$ equalities is required in a usage context (like in judicial judgements), then establishing Statistical-Parity is not relevant and the results of the studies that assess fairness based on only Statistical-Parity should be interpreted cautiously. As such, the results of this study can be insightful even for cases where we do not have the ground truth values of class attributes. The awareness about incompatibility of Statistical-Parity with Equalized-Odds can drive researchers to look for or estimate the possibility of base-rate imbalance as it could help correct interpretation of Statistical-Parity based assessments. 

There is no incompatibility between Statistical-Parity and Equalized-Odds measures if there is base-rate balance, i.e., $p_0 = p_1$, as shown in (\ref{eqEqBaseRate}). Therefore, to deal with base-rate fallacy in regard to Statistical-Parity and Equalized-Odds measures, a strategy would be to eliminate base-rate imbalance at the societal level by addressing the root cause of such imbalances, as also mentioned in \cite{bibChouldechova}. Otherwise, when establishing both Statistical-Parity and Equalized-Odds measures is necessary, as shown in this study, the solution boils down to establishing a random selection.

\section{Conclusion}
 \label{sec7} 

Justice and fairness principles that apply to data and data analytics at various stages of data journey can (partly) be specified by a set of formal measures like Statistical-Parity, Equalized-Odds, and Predictive-Parity. Knowing which fairness measures are relevant in a given context and which incompatibility forms exist among these measures is essential to monitor and preserve data fairness at various stages of data journey. Having insight in such incompatibilities enables making trade-offs among contending fairness measures and/or using complementary mitigations. 
 
In this contribution, we studied the impact of the imbalance between the base-rates of sensitive social groups when Statistical-Parity and Equalized-Odds must be satisfied for fair binary classification. We showed that enforcing both Statistical-Parity and Equalized-Odds requires either having base-rate balance or adopting a random classifier. Our analysis showed adopting an efficient classifier, which shows a good performance, e.g., higher accuracy, requires making trade-off between Statistical-Parity and Equalized-Odds. Our approach, which represents the interaction between these measures graphically, provides a visualization of the trade-offs occurring when design parameters are modified. 

Based on the insights gained, we recommended some adjustment to the current legal frameworks and practices which are based on using Statistical-Parity for assessing the fairness of datasets, classifiers and decision-making processes. For example, when enforcing both Statistical-Parity and Equalized-Odds is necessary and there is base-rate imbalance, then one must use a random classifier. As another outcome, assessing fairness based on Statistical-Parity, although being affordable in situations where there is no ground truth available for the class attribute, should be done cautiously if there is base-rate imbalance. We noted that having Statistical-Parity may be accompanied by $FPR$ or $FNR$ inequality; and lack of Statistical-Parity may be accompanied by $FPR$ equality and/or $FNR$ equality. Therefore, one should also investigate base-rate (im)balance for interpreting the outcome of Statistical-Parity based assessment of fairness.  

Several directions and avenues can be foreseen for further research. One direction is to extend the results to non-binary classification scenarios where the sensitive attribute and/or the target attribute is categorical or the target attribute is a soft output. As another direction, guidelines can be developed for making trade-offs between Statistical-Parity and Equalized-Odds measures. Our work and the works in \cite{bibChouldechova, bibKleinberg, bibGarg2020} motivate further research on and specification of the incompatibility forms that exist among other statistical fairness measures.  

\section*{Endmatter Section}

{\bf Generative AI usage statement.} The authors declare that they did not use Generative AI (GenAI), Large Language Models (LLMs), or any GenAI/LLM tool for generating text, assisting with formatting, assisting with grammar or improving fluency of writing. The authors utilized only a freely available Latex editor (TexShop) for spellchecking.

{\bf Disclaimer.} This preprint comprises results of a study carried out by the authors. Sharing the preprint via arXiv does not mean that its contents reflect the viewpoint of the WODC Research and Data Centre or the Dutch Ministry of Justice and Security.

\bibliographystyle{unsrt}
\bibliography{bib-Fairness}

\end{document}